\theoremstyle{plain}
\newtheorem{theorem}{Theorem}
\theoremstyle{definition}
\newtheorem{definition}{Definition}
\theoremstyle{remark}
\title{Accurate Fairness: Improving Individual Fairness without Trading Accuracy}
\author{
    Xuran Li, 
   Peng Wu
   \thanks{Corresponding author: Peng Wu}, 
   Jing Su
}
\begin{document}

\maketitle

\begin{abstract}
Accuracy and individual fairness are both crucial for trustworthy machine learning, but these two aspects are often incompatible with each other so that enhancing one aspect may sacrifice the other inevitably with side effects of true bias or false fairness.
We propose in this paper a new fairness criterion, \emph{accurate fairness}, to align individual fairness with accuracy. Informally, it requires the treatments of an individual and the individual's similar counterparts conform to a uniform target, i.e., the ground truth of the individual. We prove that accurate fairness also implies typical group fairness criteria over a union of similar sub-populations.
We then present a \emph{Siamese fairness} in-processing approach to minimize the accuracy and fairness losses of a machine learning model under the accurate fairness constraints. To the best of our knowledge, this is the first time that a Siamese approach is adapted for bias mitigation. We also propose fairness confusion matrix-based metrics, \emph{fair-precision}, \emph{fair-recall} and \emph{fair-F1 score}, to quantify a trade-off between accuracy and individual fairness. Comparative case studies with popular fairness datasets show that our Siamese fairness approach can achieve on average 
1.02\%-8.78\% higher individual fairness (in terms of fairness through awareness) and 
8.38\%-13.69\% higher accuracy, as well as 
10.09\%-20.57\% higher true fair rate and 
5.43\%-10.01\% higher fair-F1 score, than the state-of-the-art bias mitigation techniques. This demonstrates that our Siamese fairness approach can indeed improve individual fairness without trading accuracy. Finally, the accurate fairness criterion and Siamese fairness approach are applied to mitigate the possible service discrimination with a real Ctrip dataset, by on average fairly serving 
112.33\% more customers (specifically, 
81.29\% more customers in an accurately fair way) than baseline models.
\end{abstract}

\section{Introduction}
Machine learning aided intelligent systems have exhibited competitive performances in decision-making tasks such as loan granting \cite{equal-odds-1}, criminal justice risk assessment \cite{machine-learning-survey-2}, and online recommendations \cite{recommend-1}. 
However, the widespread deployments of such machine-learning systems have also spawned social and political concerns, 
particularly on the fairness of the decisions or predictions made by these systems.

Accuracy and fairness are both crucial for trustworthy machine learning \cite{robustness1,robustness2, OOD, Adversarial, machine-learning-survey-1}, but these two aspects may be incompatible fundamentally from their own unilateral perspectives, that is, enhancing one aspect may sacrifice the other inevitably with unacceptable consequences \cite{trade-off1,trade-off2, AAAI_2022}. 
For instance, more accurate predictions on loan applicants' incomes can benefit banks with less lending risks, but the underlying ground truth distribution may tend to prefer applicants with the majority or privileged backgrounds, due to historical practices.
Thus, accurate predictions would reflect, even exaggerate such discrimination against minority or unprivileged applicants. 
In contrast, enhancing just fairness, e.g., by blindly enforcing all the applicants to have the same access to loans, would result in trivially fair but unsound predictions for actually non-qualified applicants. 
Therefore, accurate but biased, and fair but faulty predictions do not yield a mutually beneficial trade-off between accuracy and fairness. Such incompatibility has recently been shown in \cite{AAAI_2022} specifically between non-trivial accuracy and equal opportunity, a group fairness criterion.

In this paper, we propose a new fairness criterion, \emph{accurate fairness}, to align individual fairness \cite{individually-fairness1,causal-discrimination2017} with accuracy by uniformly bounding both the accuracy difference and the fairness difference for \emph{similar} sub-populations. Any two individuals are \emph{similar} if both differ only on their sensitive attributes, e.g., genders, races, and ages. Then, an individual is treated by a machine learning model in an \emph{accurately fair} way, if its prediction results for both the individual and the individual's similar counterparts conform to the ground truth of the individual; otherwise, the prediction result for this individual is either faulty or biased. Thus, under the notion of accurate fairness, an individual and the individual's similar counterparts shall be treated similarly in conformance with a uniform target (i.e., the ground truth of the individual), without acknowledging their differences in the sensitive attributes.

As an individual-level fairness criterion, accurate fairness refines the general definition of individual fairness \cite{individually-fairness1} by explicitly focusing on similar sub-populations, where the individuals are exactly the same on their non-sensitive attributes, instead of on any individuals that are close to each other. Thus, fair but faulty predictions can be potentially reduced because a machine learning model does not have to learn ``fair" predictions for close individuals, who though differ on some of their non-sensitive attributes, without regard to their different ground truths. Accurate fairness further captures the intuition that fairness criteria shall be truthfully built upon accurate predictions. Consequently, as a by-product, we show that accurate fairness implies group fairness, specifically statistical parity \cite{Statistical-parity-1, Statistical-parity-2} and confusion matrix-based fairness \cite{machine-learning-survey-3}) over a union of similar sub-populations.  

We then present and implement a Siamese fairness approach to mitigate individual bias without trading accuracy. It simultaneously receives multiple similar individuals as training inputs, and aims to minimize the accuracy and fairness losses of a machine learning model (e.g., a neural network model, a logistic regression model, or a support vector machine) under the accurate fairness constraints.   
We further propose a \emph{fairness confusion matrix} to evaluate how well a machine learning model can balance accuracy with individual fairness, yielding \emph{fair-precision}, \emph{fair-recall}, and \emph{fair-F1 score} metrics. Fair-precision is the proportion of individually fair predictions in accurate predictions, while fair-recall is the proportion of accurate predictions in individually fair predictions. Fair-F1 score is the harmonic mean of fair-precision and fair-recall. 

Empirical studies with popular fairness datasets Adult \cite{adult}, German Credit \cite{credit} and ProPublica Recidivism \cite{COMPAS}, show that the accurate fairness criterion contributes well to delivering a truthfully fair solution for decision-making, and balances accuracy and individual fairness in a win-win manner. Compared with the state-of-the-art bias mitigation techniques, our Siamese fairness approach can on average promote the individual fairness (fairness through awareness) of a machine learning model 1.02\%-8.78\% higher, and the model accuracy 8.38\%-13.69\% higher, with 10.09\%-20.57\% higher true fair rate and 5.43\%-10.01\% higher F-F1 score.

Finally, we apply the accurate fairness criterion to evaluate a service discrimination problem with a real dataset \cite{ctrip} from Ctrip, one of the largest online travel service providers in the world. This problem concerns whether customers who pay the same prices for the same rooms are recommended the same room services, irrespective of their consumption habits. Two neural network models are trained as baseline models, which do suffer service discrimination against customers with different consumption habits. Our Siamese fairness approach can mitigate such discrimination to a great extent, by on average fairly serving 93.00\% customers (112.33\% more than the baseline models). More importantly, 81.29\% more customers are served in an accurately fair way.

The main contributions of this paper are as follows.
\begin{itemize}
\item We propose an individual level fairness criterion, \emph{accurate fairness}, such that any individual and the individual's similar counterparts shall all be treated similarly up to the ground truth of the individual. This makes it a new individual fairness criterion that is accuracy-enhanced and can imply certain group-level fairness criteria in the context of sub-populations.

\item We present and implement a Siamese fairness approach to optimize the accurate fairness of a machine learning model, by taking similar individuals as parallel training inputs. To the best of our knowledge, this is the first time that a Siamese approach is adapted for individual bias mitigation. 

\item The accurate fairness criterion and the Siamese fairness approach are applied with popular fairness datasets and a real Ctrip dataset, under the evaluation of what we propose as fairness confusion matrix-based metrics: fair-precision, fair-recall, and fair-F1 score. The case studies reveal the defects of true bias and false fairness in the learned classifiers. Our approach can indeed mitigate these defects and improve individual fairness without trading accuracy.
\end{itemize}

The rest of this paper is organized as follows. 
We briefly discuss the related work in Section 2, followed by the formal definition and discussion of the accurate fairness criterion in Section 3. We present the Siamese fairness approach in Section 4. Its implementation and evaluation results are reported and analyzed in Section 5. The paper is concluded in Section 6 with some future work.

\section{Related Work}

\subsection{Fairness Criteria}
Fairness criteria presented in the literature are usually partitioned into two categories: group fairness and individual fairness. Please refer to \cite{causal-discrimination2017,individually-fairness1,counterfactual-fairness-1,machine-learning-survey-3,machine-learning-survey-1,machine-learning-survey-2} for a comprehensive survey about machine learning fairness notions.

Group fairness criteria concern equal treatments for sub-groups with the same sensitive attribute values, and hence are usually defined statistically in terms of conditional independence.
Statistical parity \cite{Statistical-parity-1, Statistical-parity-2, Statistical-parity-3} requires predictions independent of sensitive attributes so that all the sub-groups have the same positive prediction rates. Confusion matrix-based fairness criteria, e.g., equality odds \cite{equal-odds-1} and accuracy equality \cite{machine-learning-survey-2}, require predictions independent of sensitive attributes under the given ground truths. However, group fairness criteria may be satisfied with carefully selected individuals, who are unfavorably discriminated against in contrast to their similar counterparts \cite{machine-learning-survey-1}. Thus, individual fairness for these individuals is unnecessarily neglected. 

Individual fairness criteria can be defined qualitatively or quantitatively by interpreting the notions of \emph{similar individuals} and \emph{similar treatments} \cite{individual_distance}, in order to assess whether similar individuals are treated similarly. Causal discrimination \cite{causal-discrimination2017, if-test} is such a qualitative definition, where similar individuals are those who differ only on sensitive attributes, and only equal predictions are accounted as similar treatments.In a quantitative or algorithmic definition, task-specific distance metrics are adapted to characterize the similarities between individuals and between prediction outcomes. Fairness through awareness \cite{individually-fairness1} requires that the similarity distance between individuals lays an upper bound on the similarity distance between the corresponding prediction outcomes by the Lipschitz condition. 

Individual fairness criteria concern directly the predictions themselves, which nonetheless are possibly (partly) faulty. Accurate fairness presented in this paper refactors the individual fairness criteria from a viewpoint of accuracy to clarify and quantify such incompatibility and also implies certain group fairness criteria over sub-populations. 

\subsection{Bias Mitigation}
As summarized in \cite{machine-learning-survey-3, AIF360}, the bias of a machine learning model can be mitigated through pre-processing the training data, in-processing the model itself, or post-processing the predictions \cite{post-processing}. 

Pre-processing methods aim to learn non-discriminative data representations \cite{data_augmentation}. A fair representation learning (LFR) approach \cite{lfr} obfuscates any information about sensitive attributes in the learned data representation. iFair \cite{iFair} minimizes the data loss to reconcile individual fairness with application utility.

In-processing methods train a machine learning model with fairness as an additional optimization goal. SenSR \cite{SensitiveSubspaceRobustness} improves the sensitive subspace robustness against certain sensitive perturbations through a distributionally robust optimization approach. SenSeI \cite{SenSeI} enforces the treatment invariance on certain sensitive sets by minimizing a transport-based regularizer through a stochastic approximation algorithm. 

These methods separate model accuracy from mitigating individual bias and hence may unilaterally improve individual fairness with accuracy decreasing. 
Our Siamese fairness approach minimizes the model accuracy and fairness losses uniformly subject to the new accurate fairness criterion, so mitigating individual bias does not necessarily trade accuracy.

\section{Accurate Fairness}
We present in this section the notion of accurate fairness and discuss its connections with other individual fairness and group fairness criteria.

Assume a finite and labeled dataset $V\subseteq X\times A$ with the domains of the sensitive attributes, the non-sensitive attributes, and the ground truth labels denoted $A, X, Y$, respectively. Each input $(x,a)\in V$ is associated with a ground truth label $y\in Y$. Let $I(x,a)\subseteq X\times A$ be the similar sub-population of $(x,a)$, which is the set of the individuals sharing the same non-sensitive attributes values with $(x,a)$, i.e.,  
\begin{equation*}
 I(x,a)=\{(x,a')~|~a'\in A\}
\end{equation*}
Obviously, $(x,a)\in I(x,a)$. Let $card(S)$ be the cardinal number of set $S$.

Let $f:X\times A\rightarrow {Y}$ denote a classifier learned from a training dataset $V$, and $\hat{y}=f(x,a)$ the prediction result of classifier $f$ for input $(x,a)$. Then, the accurate fairness criterion can be defined as follow. 

\begin{definition}[Accurate Fairness]\label{af-definition}
A classifier $f:X\times A\rightarrow {Y}$ is \emph{accurately fair} to an input $(x,a)\!\in\!V$, if for any individual $(x,a') \in I(x,a)$, the distance $D(y,f(x,a'))$ between the ground truth $y$ of input $(x,a)$ and the prediction result $f(x,a')$ is at most $K\geq 0$ times of the distance $d((x,a),(x,a')$ between $(x,a)$ and $(x,a')$, i.e.,
\begin{gather}
 D(y,f(x,a')) \leq Kd((x,a),(x,a'))
\end{gather}
\label{eqn:afc}
where $D(\cdot,\cdot)$ and $d(\cdot,\cdot)$ are distance metrics.
\end{definition}

Herein, the accurate fairness constraint \eqref{eqn:afc} captures uniformly the accuracy and individual fairness requirements with respect to input $(x,a)\in V$:
\begin{itemize}
\item (Accuracy) Since $(x,a)\in I(x,a)$, constraint \eqref{eqn:afc} reduces to $D(y,f(x,a))=0$ (i.e., $y=f(x,a)$ due to the identity of indiscernibles of a distance metric) for input $(x,a)$ itself;
\item (Individual Fairness) For any similar individual $(x,a')\in I(x,a)$ with $a'\neq a$, constraint \eqref{eqn:afc} reduces the Lipschitz condition \cite{individually-fairness1} $D(f(x,a),f(x,a')) \leq Kd((x,a),(x,a'))$ for similar individual $(x,a')$ within sub-population $I(x,a)$, as shown by the following theorem.
\end{itemize}

\begin{theorem}
\label{IF-theorem}
If a classifier $f:X\times A\rightarrow{Y}$ is accurately fair to an input $(x,a) \in V$, then
\begin{center}
$D(f(x,a),f(x,a')) \leq Kd((x,a),(x,a'))$
\end{center}
for any similar individual $(x,a')\in I(x,a)$ with $a'\neq a$.
\end{theorem}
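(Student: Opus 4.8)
The plan is to exploit the fact that the accurate fairness constraint in Definition~\ref{af-definition} quantifies over the \emph{entire} similar sub-population $I(x,a)$, and in particular applies to $(x,a)$ itself. The whole argument then reduces to two short observations chained together, with no real machinery required beyond the metric axioms.

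First I would instantiate constraint \eqref{eqn:afc} at the choice $a'=a$. Since $(x,a)\in I(x,a)$ and $d\bigl((x,a),(x,a)\bigr)=0$ by the identity of indiscernibles of the metric $d$, the constraint forces $D\bigl(y,f(x,a)\bigr)\leq K\cdot 0=0$, hence $D\bigl(y,f(x,a)\bigr)=0$. Applying the identity of indiscernibles a second time, now to the metric $D$, yields the exact equality $f(x,a)=y$: the classifier predicts the ground truth on $(x,a)$ itself. This is precisely the accuracy component already isolated in the bullet point preceding the theorem statement.

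Second I would substitute this identity back into the accurate fairness constraint for an arbitrary similar individual $(x,a')\in I(x,a)$ with $a'\neq a$. Because $f(x,a)=y$, the left-hand side $D\bigl(y,f(x,a')\bigr)$ is literally $D\bigl(f(x,a),f(x,a')\bigr)$, while the right-hand side $Kd\bigl((x,a),(x,a')\bigr)$ is untouched. Chaining the equality with the inequality gives exactly the desired Lipschitz-type bound $D\bigl(f(x,a),f(x,a')\bigr)\leq Kd\bigl((x,a),(x,a')\bigr)$, completing the proof.

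There is no genuinely hard step here; the only point worth flagging is that the argument invokes the metric axioms of $D$ and $d$ twice, first to collapse the self-distance $d\bigl((x,a),(x,a)\bigr)$ to zero and then to upgrade $D\bigl(y,f(x,a)\bigr)=0$ to the literal identity $f(x,a)=y$. If $D$ or $d$ were merely pseudometrics that failed the identity of indiscernibles, the reduction to exact equality would break and the substitution would only deliver the inequality up to an indiscernibility defect. I would therefore make the reliance on genuine distance metrics explicit, since it is the sole hypothesis that the substitution step actually consumes.
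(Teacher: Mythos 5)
Your proof is correct, but it routes through a different metric axiom than the paper's. You upgrade $D(y,f(x,a))=0$ to the literal identity $f(x,a)=y$ via the identity of indiscernibles of $D$, and then substitute $f(x,a)$ for $y$ in the constraint applied to $(x,a')$. The paper never needs that exact equality: it applies the triangle inequality and symmetry of $D$ to split $D(f(x,a),f(x,a'))$ into $D(y,f(x,a)) + D(y,f(x,a'))$, and then bounds each summand by the accurate fairness constraint --- the first by $Kd((x,a),(x,a))=0$, the second by $Kd((x,a),(x,a'))$. The practical difference is exactly the caveat you flag yourself: your substitution step consumes the identity of indiscernibles and would break if $D$ were only a pseudometric, whereas the paper's triangle-inequality chain needs only non-negativity, symmetry, and the triangle inequality, so it survives that weakening. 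In that sense the paper's argument is slightly more general; yours has the merit of making the accuracy component $f(x,a)=y$ fully explicit, in line with the bullet-point discussion preceding the theorem. (Minor observation: the paper's displayed triangle inequality carries a spurious factor of $K$ on the right-hand side, which is harmless given the subsequent bounds but which your version avoids entirely.)
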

\begin{proof}
Due to the triangle inequality and symmetry of a distance metric,
\begin{align*}
 D(f(x,a),f(x,a')) \leq K(D(y,f(x,a))+D(y,f(x,a')))
\end{align*}
where $a'\neq a$ and $y$ is the ground truth of input $(x,a)\in V$. 

Then, By Definition \ref{af-definition}, if classifier $f$ is accurately fair to input $(x,a)$, for any similar individual $(x,a')\in I(x,a)$ with $a'\neq a$,
\begin{align*}
 D(y,f(x,a)) &\leq Kd((x,a),(x,a)) =0\\
 D(y,f(x,a')) &\leq Kd((x,a),(x,a'))
\end{align*}
Thus, $D(f(x,a),f(x,a')) \leq Kd((x,a),(x,a'))$
\end{proof}

It can be seen from Theorem \ref{IF-theorem} that accurate fairness refactors the general definition of individual fairness \cite{individually-fairness1} over similar sub-populations on the basis of accuracy.

\newcommand{\V}[0]{W}
\newcommand{\IV}[0]{I(\V)}
\newcommand{\IaV}[1]{I_{#1}(\V)}
Accurate fairness also collectively endorses group fairness criteria over the union of similar sub-populations. Consider the following definition of accurate parity, which is a qualitative version of accurate fairness.
\begin{definition}[Accurate Parity]\label{ap-definition}
A classifier $f:X\times A\rightarrow{Y}$ is \emph{accurately equal} to an input $(x,a)\!\in\!V$, if for any individual $(x,a') \in I(x,a)$, 
\begin{gather}
y=f(x,a)=f(x,a')
\end{gather}
\label{eqn:apc}
where $y$ is the ground truth of input$(x,a)$.
\end{definition}
Obviously accurate parity entails accurate fairness because with the accurate parity constraint \eqref{eqn:apc}, $D(y,f(x,a'))=0$ for any individual $(x,a')\in I(x,a)$. 

Let $\mathbf{X}$, $\mathbf{A}$, $\mathbf{Y}$, and $\mathbf{\hat{Y}}$ denote the random variables representing the non-protected attributes, the protected attributes, the ground truths, and the prediction results. The following theorem shows that accurate parity implies statistical parity and confusion matrix-based fairness over $\IV={\cup}_{(x,a)\in\V} I(x,a)$ for certain $\V\subseteq V$. As far as the group fairness criteria are concerned, assume each individual $(x, a') \in\IV$ is associated with the same ground truth as some $(x, a) \in\V$ is.
\begin{theorem}
\label{thm:both}
If a classifier $f : X \times A \to {Y}$ is accurately equal to each input in $W \subseteq V$, then $f$ satisfies statistical parity and confusion matrix-based fairness over $\IV$.
\end{theorem}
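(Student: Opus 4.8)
The plan is to show that the accurate parity hypothesis forces the classifier to agree with the ground truth at every point of $\IV$, and that $\IV$ carries a product structure in which the protected attribute is independent of the non-protected one; both fairness claims then drop out of these two facts.

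First I would establish the pointwise identity $f(x,a')=y$ for every $(x,a')\in\IV$. By the definition $\IV=\cup_{(x,a)\in\V}I(x,a)$, any $(x,a')\in\IV$ lies in $I(x,a)$ for some $(x,a)\in\V$, so applying accurate parity (Definition~\ref{ap-definition}) to that $(x,a)$ yields $f(x,a')=f(x,a)=y$. Together with the standing assumption that $(x,a')$ shares the ground truth $y$ of $(x,a)$, this gives $\mathbf{\hat{Y}}=\mathbf{Y}$ everywhere on $\IV$; moreover both $\mathbf{\hat{Y}}$ and $\mathbf{Y}$ depend only on $\mathbf{X}$, since the value $y=y(x)$ is fixed once the non-protected coordinate $x$ is fixed and does not vary with the protected coordinate $a'$.

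Next I would record the product structure. Writing $X_\V=\{x:(x,a)\in\V \text{ for some } a\}$ for the non-protected profiles occurring in $\V$, the fact that $I(x,a)$ sweeps over all of $A$ gives $\IV=X_\V\times A$. Under the empirical counting distribution on $\IV$ each protected value $a\in A$ occurs exactly once for every $x\in X_\V$, so $\mathbf{A}$ is independent of $\mathbf{X}$. Statistical parity is then immediate: because $\mathbf{\hat{Y}}$ is a function of $\mathbf{X}$ alone and $\mathbf{X}\perp\mathbf{A}$, we get $\mathbf{\hat{Y}}\perp\mathbf{A}$; explicitly, $\Pr(\mathbf{\hat{Y}}=\hat y\mid\mathbf{A}=a)$ equals the fraction of $x\in X_\V$ with $y(x)=\hat y$, which is the same for every $a$. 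For confusion matrix-based fairness I would instead invoke the perfect-prediction identity $\mathbf{\hat{Y}}=\mathbf{Y}$: conditioning on $\mathbf{Y}=y$ forces $\mathbf{\hat{Y}}=y$ within every protected group, so $\Pr(\mathbf{\hat{Y}}=\hat y\mid\mathbf{Y}=y,\mathbf{A}=a)=\mathbbm{1}[\hat y=y]$ regardless of $a$, which is precisely equality across groups of the conditional confusion-matrix entries.

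The hard part will not be the algebra but pinning down the probabilistic setting. The statistical-parity computation hinges on $\mathbf{A}\perp\mathbf{X}$, which rests on the product structure $\IV=X_\V\times A$ together with the counting measure; I would state this distributional convention explicitly, since statistical parity can fail if the protected groups are weighted with $x$-dependent frequencies. I would likewise make sure that ``confusion matrix-based fairness'' is read as equality across groups of the group-conditional rates, which here are all degenerate (true positive rate $1$, false positive rate $0$) exactly because the classifier is perfect on $\IV$; the only residual care is vacuity when some group--label cell is empty, which is handled by assigning it the common degenerate value.
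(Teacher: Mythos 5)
Your proposal is correct, and it rests on the same two pillars as the paper's own proof --- accurate parity forces $\mathbf{\hat{Y}}=\mathbf{Y}$ everywhere on $I(W)$, and $I(W)$ has the product form $X_W\times A$, where $X_W$ is the set of non-protected profiles occurring in $W$ --- but you arrange the argument differently and, in one place, more carefully. The paper first observes that $\mathbf{P}(\mathbf{\hat{Y}}=\mathbf{Y}\mid\mathbf{A}=a)$ is the same for every $a$ (all such probabilities equal $1$), reads off confusion matrix-based fairness from that, and then derives statistical parity from the decomposition $\mathbf{P}(\mathbf{\hat{Y}}=\hat{y}\mid\mathbf{A}=a)=\mathbf{P}(\mathbf{\hat{Y}}=\hat{y},\mathbf{\hat{Y}}=\mathbf{Y}\mid\mathbf{A}=a)+\mathbf{P}(\mathbf{\hat{Y}}=\hat{y},\mathbf{\hat{Y}}\neq\mathbf{Y}\mid\mathbf{A}=a)$. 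That last step is where the paper is terse: under perfect prediction the second summand vanishes and the first equals $\mathbf{P}(\mathbf{Y}=\hat{y}\mid\mathbf{A}=a)$, so parity still requires $\mathbf{Y}\perp\mathbf{A}$ over $I(W)$ --- a fact the paper does not argue inside the proof and only acknowledges in the remark that follows the theorem. Your proof supplies exactly this missing ingredient explicitly: the product structure $I(W)=X_W\times A$ under the counting measure gives $\mathbf{X}\perp\mathbf{A}$, and since $\mathbf{\hat{Y}}=\mathbf{Y}$ depends on $\mathbf{X}$ alone, $\mathbf{\hat{Y}}\perp\mathbf{A}$ follows; your confusion-matrix argument is likewise cleaner, noting that all group-conditional rates are degenerate (true positive rate $1$, false positive rate $0$) because the classifier is perfect on $I(W)$. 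What your version buys is rigor about the distributional convention --- you correctly flag that statistical parity can fail if the protected groups are weighted with $x$-dependent frequencies, and you handle empty group--label cells --- while the paper's version buys brevity, staying inside the probability calculus without ever naming the measure on $I(W)$.
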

\begin{proof}
The accurate parity constraint \eqref{eqn:apc} implies that
\begin{center}
$\mathbf{P}(\mathbf{\hat{Y}}=\mathbf{Y}|\mathbf{A}=a)=\mathbf{P}(\mathbf{\hat{Y}}=\mathbf{Y}|\mathbf{A}=a')$
\end{center}
over $\IV$ for any $a\neq a'$, and hence
\begin{center}
$\mathbf{P}(\mathbf{\hat{Y}}\neq\mathbf{Y}|\mathbf{A}=a)=\mathbf{P}(\mathbf{\hat{Y}}\neq\mathbf{Y}|\mathbf{A}=a')$
\end{center}
over $\IV$. Thus, the accuracy (or inaccuracy) for the similar individuals are independent on the sensitive attributes. Therefore, $f$ satisfies the confusion matrix-based fairness over $\IV$. Furthermore, $f$ also satisfies statistical parity over $\IV$, because $\mathbf{P}(\mathbf{\hat{Y}}=\hat{y}|\mathbf{A} = a) = \mathbf{P}(\mathbf{\hat{Y}}=\hat{y},\mathbf{\hat{Y}} = \mathbf{Y} | \mathbf{A} = a)+\mathbf{P}(\mathbf{\hat{Y}}=\hat{y},\mathbf{\hat{Y}}\neq\mathbf{Y}|\mathbf{A}=a)$ for any $\hat{y}\in{Y}$.
\end{proof}

Note that Proposition 3 in \cite{barocas-hardt-narayanan} shows that statistical parity (independence) and confusion matrix-based fairness (separation) cannot both hold unless $\mathbf{A}\bot\mathbf{Y}$ or $\mathbf{\hat{Y}}\bot\mathbf{Y}$, while the former is admitted on $\IV$ under the accurate parity constraints. 

Generally speaking, accurate fairness lays an upper bound on the treatment differences between groups with different sensitive attribute values. Let $\IaV{a}$ be the set of individuals in $\IV$ with the same sensitive attribute values $a\in A$, i.e., $\IaV{a}=\{(x,a)|(x,a) \in \IV\}$.  

\begin{theorem}
\label{loss-theorem}
If a classifier $f:X\times A\rightarrow{Y}$ is accurately fair to each input in $\V\subseteq V$, for any $(x,a^*)\in\V$, $a\in A$ and $a\neq a^*$, then over $\IaV{a}$, 
\begin{eqnarray*}
\mathbf{E}[D(\mathbf{Y},f(\mathbf{X},a))] & \leq & K\mathbf{E}[d((\mathbf{X},a^*),(\mathbf{X},a))]\\
\mathbf{E}[D(f(\mathbf{X},a^*),f(\mathbf{X},a))] & \leq & K\mathbf{E}[d((\mathbf{X},a^*),(\mathbf{X},a))]
\end{eqnarray*}
\end{theorem}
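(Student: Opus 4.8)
The plan is to establish both inequalities first at the level of a single sample and then lift them to expectations using the monotonicity and linearity of $\mathbf{E}[\cdot]$. The two pointwise bounds come essentially for free: the first from the accurate fairness constraint \eqref{eqn:afc} in Definition~\ref{af-definition}, and the second from Theorem~\ref{IF-theorem}. The only structural fact I would exploit is that every individual in $\IaV{a}$ shares its non-sensitive attributes with an anchor in $\V$, which is exactly what lets each anchor's constraint act on the individual with sensitive value $a$.

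For the first inequality, I would fix a realization $(\mathbf{X},a)=(x,a)\in\IaV{a}$. Since $\IV={\cup}_{(x,a)\in\V}I(x,a)$, there is an anchor $(x,a^*)\in\V$ with the same non-sensitive value $x$, so that $(x,a)\in I(x,a^*)$. As $f$ is accurately fair to $(x,a^*)$, constraint \eqref{eqn:afc} applied to the similar individual $(x,a)$ gives $D(y,f(x,a))\leq Kd((x,a^*),(x,a))$, where $y$ is the ground truth of the anchor $(x,a^*)$. Invoking the standing assumption stated just before Theorem~\ref{thm:both}, that each member of $\IV$ inherits the ground truth of its anchor, this $y$ is precisely the realization of $\mathbf{Y}$ attached to $(x,a)$, yielding the pointwise bound $D(\mathbf{Y},f(\mathbf{X},a))\leq Kd((\mathbf{X},a^*),(\mathbf{X},a))$ over $\IaV{a}$.

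For the second inequality, I would apply Theorem~\ref{IF-theorem} at the same anchor $(x,a^*)$: since $f$ is accurately fair to $(x,a^*)$ and $a\neq a^*$, we obtain $D(f(x,a^*),f(x,a))\leq Kd((x,a^*),(x,a))$, i.e.\ $D(f(\mathbf{X},a^*),f(\mathbf{X},a))\leq Kd((\mathbf{X},a^*),(\mathbf{X},a))$ pointwise over $\IaV{a}$. Taking $\mathbf{E}[\cdot]$ over $\IaV{a}$ on each pointwise inequality and pulling the nonnegative constant $K$ out by linearity then delivers both claimed bounds, since expectation preserves inequalities.

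The step that needs care is not any computation but the bookkeeping of anchors: I must confirm that each $(\mathbf{X},a)\in\IaV{a}$ is matched to a well-defined anchor $(\mathbf{X},a^*)\in\V$ with the same $\mathbf{X}$ (guaranteed by the definition of $\IV$), so that the same random variable $\mathbf{X}$ drives both sides of each inequality and the expectations are taken over a single, coherent distribution on $\IaV{a}$. The ground-truth identification in the first inequality is the one place where the argument relies essentially on the assumption preceding Theorem~\ref{thm:both} rather than on the distance-metric axioms alone; once it is in force, both inequalities collapse to a single use of the monotonicity of expectation.
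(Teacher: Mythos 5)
Your proof is correct and takes essentially the same route as the paper's: the paper likewise writes each expectation over $I_a(W)$ as a probability-weighted sum and bounds it term by term via the pointwise inequalities $D(y,f(x,a))\leq Kd((x,a^*),(x,a))$ and $D(f(x,a^*),f(x,a))\leq Kd((x,a^*),(x,a))$ obtained from Definition~\ref{af-definition} (and, for the second, Theorem~\ref{IF-theorem}) applied at each anchor $(x,a^*)\in W$, which is exactly your monotonicity-of-expectation argument. Your explicit bookkeeping of anchors and of the ground-truth-inheritance assumption only spells out what the paper leaves implicit.
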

\begin{proof}
Recall that over $\IaV{a}$, 
\begin{equation*}
 \mathbf{E}[D(\mathbf{Y},f(\mathbf{X},a))] = \sum_{(x,a) \in \IaV{a}}\mathbf{P}(y,x,a)D(y,f(x,a))
\end{equation*}
where $y$ is the ground truth of $(x,a^*)$ and $\mathbf{P}(y,x,a)$ is the joint probability of $\mathbf{Y}=y$ and $\mathbf{(X,A)}=(x,a)$.
\begin{equation*}
\mathbf{E}[d((\mathbf{X},a^*),(\mathbf{X},a))] = \sum_{(x,a) \in \IaV{a}}\mathbf{P}(x,a)d((x,a^*),(x,a))
\end{equation*}
where $\mathbf{P}(x,a)$ is the probability of $\mathbf{(X,A)}=(x,a)$.
\begin{equation*}
 \mathbf{E}[D(f(\mathbf{X},a^*),f(\mathbf{X},a))] = \sum_{(x,a) \in \IaV{a}}\mathbf{P}(x,a)D(f(x,a^*),f(x,a))
\end{equation*}

Then, since $f$ is accurately fair to each input in $\V$, the proof is concluded by Definition \ref{af-definition}.
\end{proof}

Theorems \ref{loss-theorem}
suggest that under the accurate fairness criterion, the treatment differences between individuals and their similar counterparts are also bounded by distances between these individuals themselves, hence leading to quantitatively fair treatments over groups with difference sensitive attributes values. 

\section{Siamese In-Processing for Accurate Fairness}
We present in this section the Siamese fairness in-processing approach to achieve accurate fairness. It intends to train a machine learning model for the following optimization problem, which minimizes the cumulative loss for the union $I(V)$ of similar populations, subject to the accurate fairness constraints.
\begin{gather}
\min_{f} \sum_{(x_i,a_i)\in V}\sum_{(x_i,a_{ij})\in I(x_i,a_i)} L(y_{i},f(x_i,a_{ij})) \label{primary_optimization}\\
\text{s.t. } D(y_{i}, f(x_i,a_{ij})) \leq Kd((x_i,a_i),(x_i,a_{ij})) \notag \\
\quad\text{for any }(x_i,a_i)\in V,(x_i,a_{ij})\in I(x_i,a_i) \notag
\end{gather}
where $y_{i}$ is the ground truth of $(x_i,a_i)\in V$, $1\leq i\leq card(V)$, $1\leq j\leq card(I(x_i,a_i))$ and $L(\cdot, \cdot)$ is a loss function for training the machine learning model. 

By appealing to Karush–Kuhn–Tucker conditions \cite{convex}, it is equivalent to solve the following max-min optimization problem with the Lagrange multipliers $\lambda_{ij}\geq 0$ for each $(x_i,a_i)\in V,(x_i,a_{ij})\in I(x_i,a_i)$, assuming that the loss function $L(\cdot, \cdot)$ and the distance metrics $D(\cdot, \cdot)$ and $d(\cdot, \cdot)$ are all convex.
\begin{align}\label{duality_optimization}
\max_{\lambda}\min_{f} &\sum_{(x_i,a_i)\in V}\sum_{(x_i,a_{ij})\in I(x_i,a_i)} \Big(L(y_i,f(x_i,a_{ij}))+
\\&\lambda_{ij} \big(D(y_i,f(x_i,a_{ij}))- Kd((x_i,a_i),(x_i,a_{ij}))\big)\Big) \notag
\end{align} 

It can be seen that the objective function in \eqref{duality_optimization} renders a possibility of stochastic estimation with observations on the union $I(V)$ of the similar sub-populations, instead of just the training dataset $V\subseteq I(V)$. A Siamese network can accept multiple inputs in parallel to train multiple models with shared parameters \cite{Siamese-network}. Thus, it provides a training mechanism to treat individuals in a similar sub-population in a uniform manner.

Therefore, we propose to adapt a Siamese network for accurate fairness in-processing. The architecture of our approach is shown in \figurename~\ref{pic:architecture_of_SF}. It first generates the similar sub-population $I(x_i,a_i)$ for each training input $(x_i,a_i)\in V$ with $1\leq i\leq card(V)$ through fair augmentation. Then, it trains $m=card(I(x_i,a_i))$ copies of a machine learning model with shared parameters $\theta$ for the sake of minimizing the accurate fairness loss $L_{AF}(x_i,a_i,\lambda_i)$ over the similar sub-population $I(x_i,a_i)$:
\begin{align*}
\label{AF_loss_function}
L_{AF}(x_i,a_i,\lambda_i)=\sum_{(x_i,a_{ij}) \in I(x_i,a_i)} \Big(L(y_i,f(x_i,a_{ij}))+
\\\lambda_{ij} (D(y_i,f(x_i,a_{ij}))- Kd((x_i,a_i),(x_i,a_{ij})))\Big)
\end{align*}
where $\lambda_i=(\lambda_{i1},\cdots,\lambda_{ij},\cdots,\lambda_{im})$.

\begin{figure}[!htb]
 \centering
 \centerline{\includegraphics[scale=0.20, trim=0 30 0 30]{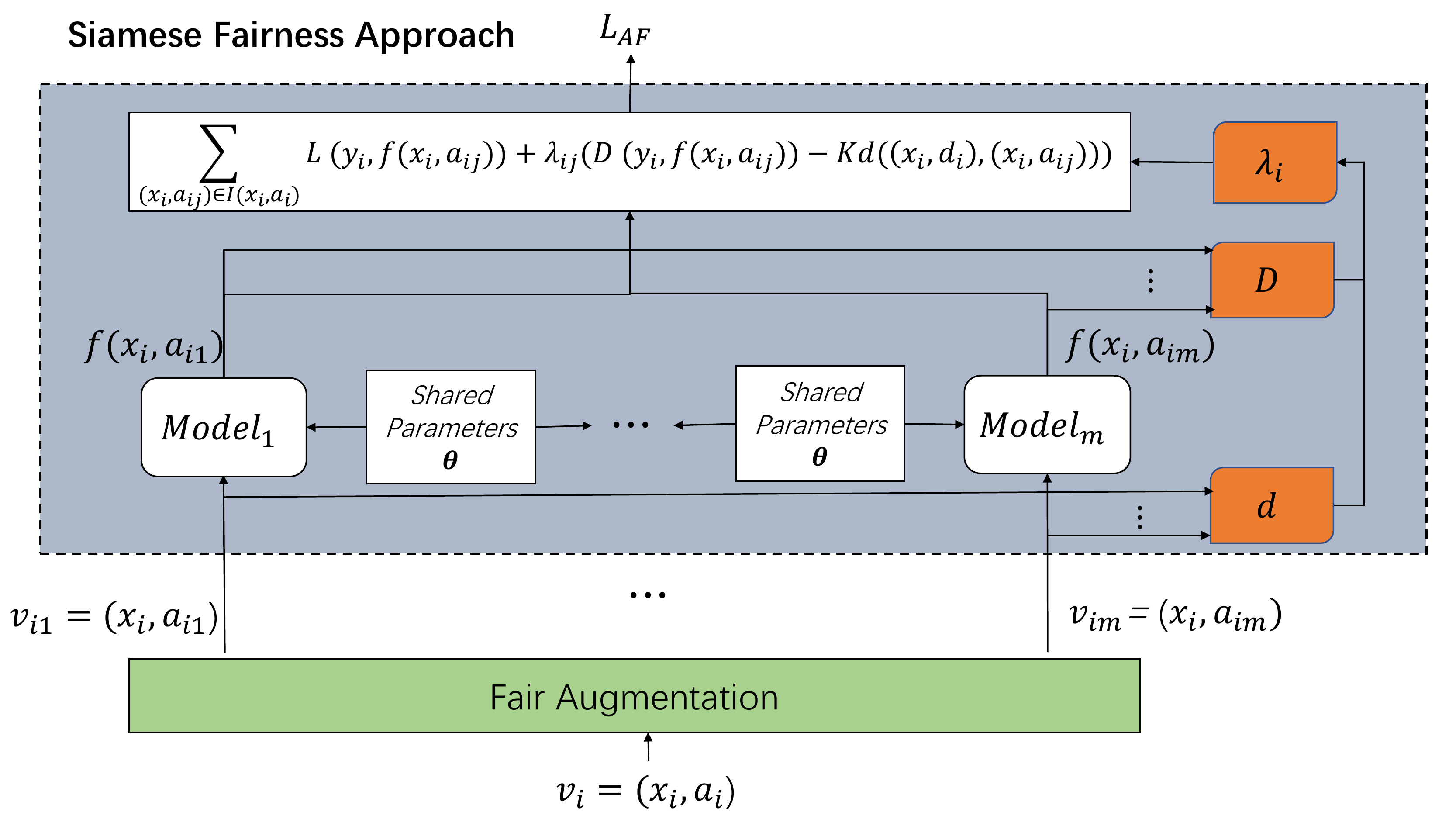}}
 \caption{Siamese fairness approach}
 \label{pic:architecture_of_SF}
 \vskip -0.1in
\end{figure}

Algorithm \ref{alg:SF_algorithm} shows the workflow of our Siamese fairness approach in detail. At Lines 1-6, the training dataset $V$ is augmented with the similar counterparts of each input $v_i=(x_i,a_i)\in V$, resulting in $I(V)$ for the subsequent Siamese training. At Lines 10-12, each $Model_j$ in \figurename~\ref{pic:architecture_of_SF} run a copy of classifier $f_\theta$ with the shared parameters $\theta$, 
accepting the $j$-th similar individual $(x_i,a_{ij})\in I(x_i,a_i)$ and producing $f_\theta(x_i,a_{ij})$ for $1\leq j\leq card(I(x_i,a_i))$. At Lines 13-16,  the Lagrange multipliers $\lambda_i$ and the shared parameters $\theta$ for $v_i$ are obtained by applying an error Back-Propagation (BP) algorithm \cite{BP, BP-1, BP-2} to optimize $\sum_{(x_i,a_i)\in V}L_{AF}(x_i,a_i,\lambda_i)$ (i.e., the objective function in \eqref{duality_optimization}).

\begin{algorithm}[!hbt]
\caption{Siamese Fairness (SF)}
\label{alg:SF_algorithm}
\textbf{Input}: dataset $V$, classifier $f_\theta$, learning rate $\eta$\\
\textbf{Output}: classifier $f_\theta$
\begin{algorithmic}[1] 
\FOR{each \(v_i=(x_i,a_i) \in V\)} 
\STATE $I(x_i,a_i)\gets\{v_i\}$;
\FOR{each $a\in A$ and $a\neq a_i$}
\STATE $I(x_i,a_i)\gets I(x_i,a_i)\cup\{(x_i,a)\}$
\ENDFOR
\ENDFOR
\STATE Initialize $\lambda_1,\cdots,\lambda_{card(V)}$ and  \(\theta\);
\REPEAT
\FOR{each \(v_i=(x_i,a_i) \in V\)}
\FOR{each \((x_i,a_{ij})\in I(x_i,a_i)\)}
\STATE Compute $f_\theta(x_i,a_{ij})$;
\ENDFOR
\FOR{each \(\lambda_{ij}\in\lambda_i, w\in\theta\)}
\STATE \(\lambda_{ij}\leftarrow \max (0,\lambda_{ij}+\eta \frac{\partial L_{AF}(x_i,a_i,\lambda_i)}{\partial \lambda_{ij}})\);
\STATE \(w\leftarrow w-\eta \frac{\partial L_{AF}(x_i,a_i,\lambda_i)}{\partial w}\)
\ENDFOR
\ENDFOR
\UNTIL{$\theta$ converges or the maximal number of iterations is reached}
\STATE \textbf{return} \(f_\theta\)
\end{algorithmic}
\end{algorithm}

The Siamese in-processing architecture in \figurename~\ref{pic:architecture_of_SF} allows treating individuals in one similar sub-population simultaneously and uniformly as a whole during each iteration of back-propagation in Algorithm \ref{alg:SF_algorithm}, while classical training algorithms usually handle inputs one by one, unable to accommodate the accurate fairness criterion for bias mitigation.

\section{Implementation and Evaluation}
We implement the Siamese fairness approach (Algorithm \ref{alg:SF_algorithm}) in Python 3.8 with TensorFlow 2.4.1. Our implementation is evaluated on a Ubuntu 18.04.3 system with Intel Xeon Gold 6154 @3.00GHz CPUs, GeForce RTX 2080 TI GPUs, and 512G memory, in comparison with the state-of-the-art individual fairness bias mitigation techniques with regard to binary or multi-valued sensitive attributes, or the combinations thereof. The source code and the experimental datasets and models are available at \url{https://github.com/Xuran-LI/AccurateFairnessCriterion}. 

\subsection{Datasets and Models}
The three popular fairness datasets Adult, German Credit and COMPAS, and a real dataset from Ctrip are used for the evaluation. The instances with unknown or empty values have been removed from the datasets before training. \tablename~\ref{tab-datasets} reports the size and the sensitive attributes of each dataset, and the models trained with these datasets. ``$A$($m$-valued)" means that attribute $A$ has $m$ values. An FCNN($l$) model is a fully connected neural network (FCNN) classifier with $l$ layers; while an LR or SVM model is a logistic regression (LR) or Support Vector Machine (SVM) classifier, respectively. These classifiers are referred to as the baseline (BL) models in the evaluation. 
\begin{table}[!htb]
 \centering
 \resizebox{\columnwidth}{!}{
\begin{tabular}{cccc}
\hline
Dataset & Size & Models & Sensitive Attributes \\ \hline
\makecell*[c]{Adult (Census Income) } & 45222 & \makecell*[c]{FCNN(3)\\LR\\SVM}&\makecell*[c]{gender (binary) \\ age (71-valued) \\ race (5-valued) } \\ \hline 
\makecell*[c]{German Credit } & 1000 & \makecell*[c]{FCNN(3)\\LR\\SVM} & \makecell*[c]{gender (binary) \\ age (51-valued)} \\ \hline
\makecell*[c]{ProPublica Recidivism \\(COMPAS) } & 6172 & \makecell*[c]{FCNN(3)\\LR\\SVM} & \makecell*[c]{gender (binary) \\ age (71-valued)\\ race (6-valued)} \\ \hline
\makecell*[c]{Ctrip } & 68191 & \makecell*[c]{FCNN(3)\\FCNN(5)} & \makecell{ 6 customer \\ consumption habits} \\ \hline
\end{tabular}
}
\caption{Datasets and Models}
\label{tab-datasets}
\end{table}

\subsection{Evaluation Metrics}
In addition to the accuracy metric (ACC), the group fairness metrics (including statistical parity difference (SPD), equal odds difference (EOD), and average odds difference (AVOD)), and the individual fairness metrics (including fairness through awareness (FTA), consistency (CON)), we propose a fairness confusion matrix (as shown in \tablename~\ref{tab:FCM}), and the following fairness confusion matrix based metrics to evaluate the bias mitigation performance of a machine learning model in balancing its accuracy with individual fairness. 
\begin{table}[!hbt]
\centering
 \setlength{\tabcolsep}{2.8mm}{
    \begin{tabular}{c|cc} \hline
    \diagbox{Accuracy}{Fairness} & Fair & Biased \\ \hline
    True & True Fair & True Biased \\ 
    False & False Fair & False Biased \\ \hline
    \end{tabular}
 }
  \caption{Fairness Confusion Matrix}
  \label{tab:FCM}
\end{table}

\begin{definition}[Fairness Confusion Matrix Based Metrics]\label{FCM_definition}
For classifier $f:X\times A\rightarrow{Y}$ and input $(x,a)\in V$,
\begin{itemize}
\item the prediction $f(x,a)$ is \emph{true fair} if the prediction $f(x,a')$ for any $(x,a')\in I(x,a)$ conforms to $y$, the ground truth of $(x,a)$;
\item $f(x,a)$ is \emph{true biased} if it conforms to $y$, but the prediction $f(x,a')$ for some $(x,a')\in I(x,a)$ with $a'\neq a$ does not;
\item $f(x,a)$ is \emph{false fair} if it does not conform to $y$, but is consistent to the prediction $f(x,a')$ for any $(x,a')\in I(x,a)$ with $a'\neq a$; 
\item $f(x,a)$ is \emph{false biased} if neither the predictions $f(x,a')$ for all $(x,a')\in I(x,a)$ are consistent to each other, nor $f(x,a)$ conforms to $y$. 
\end{itemize}

Let \emph{True Fair Rate} (TFR), \emph{True Biased Rate} (TBR), \emph{False Fair Rate} (FFR), \emph{False Biased Rate} (FBR) be the proportion of the true fair, true biased, false fair, false biased predictions in all the predictions on $V$, respectively. Then,
\emph{Fair-Precision} (F-P), \emph{Fair-Recall} (F-R) and \emph{Fair-F1 Score} (F-F1) can be defined as follows:
\begin{align*}
F\mbox{-}P = \dfrac { TFR }{ TFR\!+\!TBR } & \quad & F\mbox{-}R = \dfrac { TFR }{ TFR\!+\!FFR } \\
\multicolumn{3}{c}{$F\mbox{-}F1 = \dfrac { 2\times {F\mbox{-}P}\times F\mbox{-}R }{ F\mbox{-}P+F\mbox{-}R}$}
\end{align*}
\end{definition}

Informally, the fairness confusion matrix summarizes the orthogonal synergy between individual fairness and accuracy. $F\mbox{-}P$ measures the individually fair proportion in the accurate predictions, while $F\mbox{-}R$ measures the accurate proportion in the individually fair predictions. $F\mbox{-}F1$ combines fair-precision and fair-recall to measure the compatibility between accuracy and individual fairness. 

\subsection{Mitigating Individual Bias}
\label{sec-5-2}
\tablename ~\ref{tab:average_result} reports the average statistics of ten runs for each bias mitigation approach compared over the three fairness datasets. Columns iFair, LFR, SSI, SSR, SF, and SF\_3 show the performances of the FCNN classifiers by applying iFair \cite{iFair}, LFR \cite{AIF360,lfr}, SenSeI \cite{SenSeI}, SenSR \cite{SensitiveSubspaceRobustness}, and our Siamese fairness approach on the baseline models, respectively. For the SF models, all the sensitive attribute values are used (whenever applicable) for augmentation, while for the SF\_3 models, only the maximum and minimum values of the sensitive attributes are used for augmentation to save computation consumption. The metrics with subscript $I(V)$ are computed over an augmented dataset $I(V)$, instead of an original (default) dataset $V$. For our SF and SF\_3 models, we use the Mean Squared Error loss function for the FCNN and LR classifiers, and the Hinge loss function for the SVM ones. The distance metrics in the accurate fairness constraints are all implemented with the Mean Absolute Error. An Adam optimizer \cite{adam} is deployed for training the FCNN and SVM classifiers, while a gradient descent optimizer \cite{SGD,SGD1} is for training the LR ones.

\figurename~\ref{pic:Compare_FCM_on_FCNN} and \figurename~\ref{pic:Fairea_FCNN} further demonstrates the fairness-accuracy trade-offs in terms of fairness confusion matrix performances and the Fairea evaluation \cite{fairea}, respectively.

\begin{table*}[!htb]
\begin{tabular}{c|ccccccc}
\hline
Metrics & BL & iFAIR & LFR & SSI & SSR & SF & SF\_3 \\ \hline
ACC & 0.871±0.007 & 0.763±0.047 & 0.814±0.020 & 0.851±0.017 & 0.648±0.076 & {0.874±0.004} & {0.876±0.004} \\ 
SPD & 0.105±0.015 & 0.110±0.095 & 0.096±0.059 & 0.058±0.010 & 0.077±0.085 & 0.111±0.015 & \multirow{6}{*}{——} \\ 
EOD & 0.046±0.011 & 0.109±0.092 & 0.124±0.054 & 0.038±0.010 & 0.086±0.089 & 0.051±0.016 &  \\ 
AVOD & 0.037±0.020 & 0.093±0.123 & 0.146±0.076 & 0.041±0.013 & 0.088±0.112 & 0.030±0.017 &  \\ 
SPD$_{I(V)}$ & 0.045±0.018 & 0.079±0.073 & 0.102±0.073 & 0.007±0.004 & 0.080±0.090 & {0.003±0.002} &  \\ 
EOD$_{I(V)}$ & 0.059±0.021 & 0.086±0.079 & 0.124±0.082 & 0.009±0.006 & 0.083±0.093 & {0.007±0.004} &  \\ 
AVOD$_{I(V)}$ & 0.081±0.028 & 0.067±0.097 & 0.149±0.101 & 0.011±0.006 & 0.087±0.111 & {0.005±0.003} &  \\ 
CON & 0.928±0.005 & 0.976±0.018 & 0.970±0.005 & 0.957±0.007 & 0.957±0.041 & 0.934±0.008 & 0.935±0.009 \\ 
FTA & 0.933±0.025 & 0.913±0.075 & 0.897±0.073 & 0.994±0.007 & 0.822±0.189 & 0.986±0.009 & 0.984±0.009 \\ 
TFR & 0.827±0.020 & 0.718±0.075 & 0.761±0.053 & 0.848±0.017 & 0.549±0.151 & {0.867±0.006} & {0.866±0.007} \\ 
TBR & 0.044±0.019 & 0.045±0.047 & 0.053±0.042 & 0.003±0.004 & 0.099±0.109 & 0.008±0.005 & 0.010±0.006 \\ 
FFR & 0.105±0.010 & 0.195±0.056 & 0.136±0.029 & 0.146±0.017 & 0.273±0.090 & 0.119±0.006 & 0.118±0.006 \\ 
FBR & 0.024±0.011 & 0.042±0.040 & 0.050±0.033 & 0.003±0.004 & 0.079±0.085 & 0.007±0.005 & 0.006±0.004 \\ 
F-R & 0.884±0.008 & 0.789±0.055 & 0.851±0.027 & 0.853±0.017 & 0.663±0.104 & 0.878±0.005 & 0.879±0.005 \\ 
F-P & 0.947±0.023 & 0.941±0.062 & 0.933±0.052 & 0.996±0.005 & 0.838±0.187 & 0.991±0.006 & 0.988±0.007 \\ 
F-F1 & 0.913±0.009 & 0.854±0.043 & 0.885±0.022 & 0.917±0.010 & 0.725±0.129 & {0.930±0.003} & {0.929±0.004} \\ \hline
\end{tabular}
\caption{Statistics of FCNN classifiers on the three fairness datasets}
\label{tab:average_result}
\end{table*}

\begin{figure}[!htb]
\centering
\includegraphics[scale=0.38, trim=0 30 0 30]{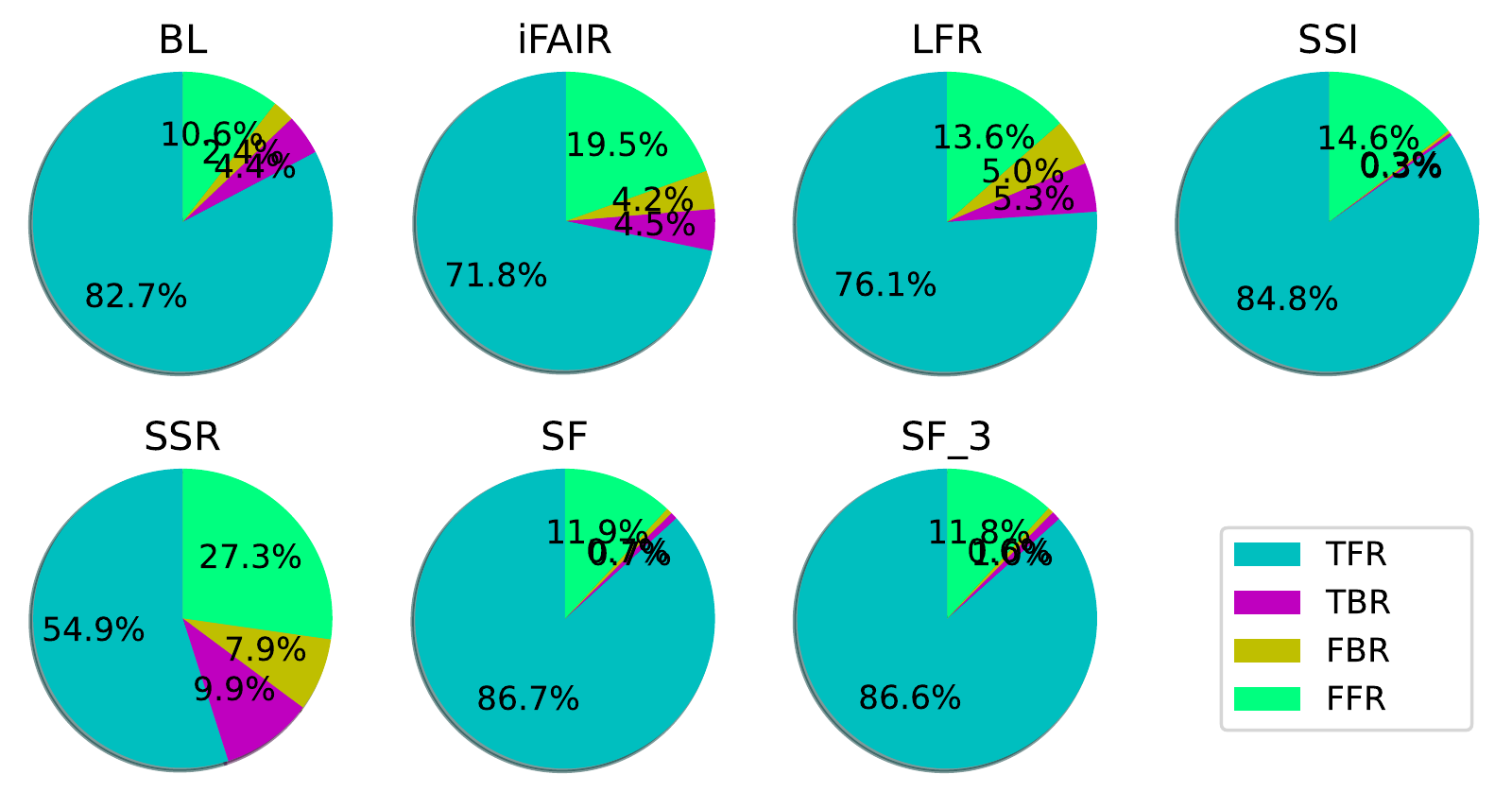}
\caption{Fairness Confusion Matrix performances (FCNNs)}
\label{pic:Compare_FCM_on_FCNN}
 \vskip -0.1in
\end{figure}
\begin{figure}[!htb]
\centering
\includegraphics[scale=0.20, trim=0 30 0 30]{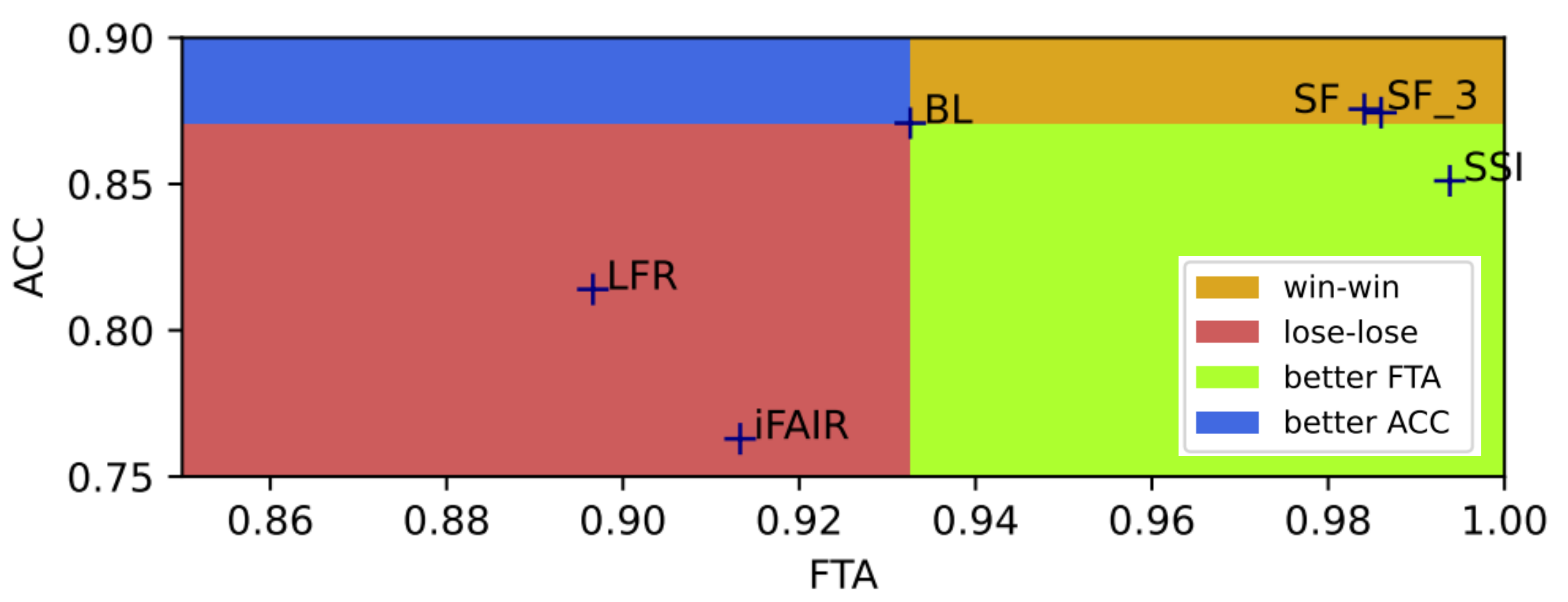}
\caption{Fairea evaluation (FCNNs)}
\label{pic:Fairea_FCNN}
\end{figure}

It can be seen in \tablename ~\ref{tab:average_result} and \figurename~\ref{pic:Compare_FCM_on_FCNN} that our Siamese fairness approach achieves the highest TFR and F-F1 performances, with both accuracy (ACC) and individual fairness (FTA) well improved. Compared with the state-of-the-art individual bias mitigation approaches, our Siamese fairness approach on average promotes 0.105 ACC (13.69\% higher ACC) and 0.080 FTA (8.78\% higher FTA) of a classifier. This is a direct consequence of the observation that our Siamese fairness approach promotes accurate fairness with on average 0.148 TFR (20.57\% higher TFR), reducing 0.043 TBR (85.03\% lower TBR), 0.037 FBR (85.07\% lower FBR), 0.069 FFR (36.53\% lower FFR), and promoting 0.085 F-F1 score (10.01\% higher F-F1 score).

The Fairea evaluation approach also certifies that only the SF and SF\_3 models fall into the win-win trade-off region, as shown in \figurename~\ref{pic:Fairea_FCNN}, which supplies a sufficiently strong signal on the bias mitigation effectiveness of our Siamese fairness approach. The other bias mitigation approaches may improve the individual fairness of a classifier but at the cost of its accuracy.
Moreover, our Siamese fairness approach also achieves the highest group fairness over the union of similar sub-populations. This suggests that accurate fairness can help reach a synergy between individual fairness and group fairness, such that improving group fairness can be manifested by improving individual fairness.


We herein discuss the performances of the FCNN classifiers. Similar observations can be made on the LR and SVM classifiers. Please refer to the supplementary material for their detailed experimental results.


\subsection{Service Discrimination with the Ctrip Dataset}
We then apply the accurate fairness criterion and the Siamese fairness approach to investigate a service discrimination problem, where customers with different consumption habits may be recommended disparate services, even though they pay the same prices for the same rooms. The Ctrip dataset includes 6 consumption habits attributes of customers (including the average time of order confirmation, the average advance days of booking, the average star level, class level, recommended level of hotels booked, and the average days of hotel stay) and 6 attributes of hotels (including order date, hotel ID, room type, room ID, star level and room price). For the service discrimination problem, the 6 customer attributes are designated as the sensitive attributes. The ground truth labels represent the room service types.

\begin{table}[!htb]
\resizebox{\linewidth}{!}{
\begin{tabular}{c|cccc}
\hline
Metric & BL(3) & SF\_3(3) & BL(5) & SF\_3(5) \\ \hline
ACC & 0.664±0.002 & 0.660±0.005 & 0.666±0.003 & 0.655±0.010 \\ 
CON & 0.940±0.009 & 0.978±0.008 & 0.927±0.010 & 0.988±0.013 \\ 
FTA & 0.524±0.107 & 0.902±0.022 & 0.352±0.188 & 0.958±0.041 \\ 
TFR & 0.412±0.068 & {0.620±0.013} & 0.281±0.145 & {0.637±0.010} \\ 
TBR & 0.251±0.069 & 0.040±0.013 & 0.385±0.146 & 0.018±0.018 \\ 
FFR & 0.112±0.039 & 0.282±0.010 & 0.071±0.043 & 0.321±0.032 \\ 
FBR & 0.225±0.038 & 0.058±0.012 & 0.263±0.042 & 0.024±0.023 \\ 
F-R & 0.792±0.030 & 0.688±0.004 & 0.804±0.031 & 0.666±0.020 \\ 
F-P & 0.621±0.103 & 0.940±0.019 & 0.423±0.218 & 0.972±0.027 \\ 
F-F1 & 0.689±0.052 & {0.794±0.006} & 0.516±0.206 & {0.790±0.006} \\ \hline
\end{tabular}
}
\caption{Statistics of FCNNs on the Ctrip datatset}
\label{tab:Statistics_ctrip} 
\end{table}

As reported in \tablename~\ref{tab:Statistics_ctrip}, the baseline (BL) models get an accuracy of 66.48\% on average, but only 34.68\% (TFR) customers are treated both accurately and fairly. Through Siamese fairness in-processing, the average TFR is extremely improved to 62.85\%, very close to its upper bound, which is the average accuracy of 65.77\%. Our Siamese fairness approach can make most (on average 93.00\%) of the customers to be fairly served irrespective of their consumption habits. Thus, for a further truthful promotion of their individual fairness, it is left to improve the accuracy of the classifiers themselves, instead of trading it.

\section{Conclusion}
We present in this paper the accurate fairness criterion, which is built upon the intuition that similar sub-populations shall be treated similarly up to the ground truths. Accurate fairness enhances individual fairness from the perspective of accuracy and paves the way to achieve harmony among accuracy, individual fairness, and group fairness. The accurate fairness criterion also induces a fairness confusion matrix that can identify the side effects of trading accuracy for individual fairness and vice versa, i.e., resulting in individually fair but faulty predictions (false fairness), or accurate but individually biased predictions (true bias).  

Then we present and evaluate the Siamese fairness in-processing approach in terms of fairness confusion matrix-based metrics. Our Siamese fairness approach aims to maximize the accurate fairness of a decision-making model with similar sub-populations as parallel training inputs. In this way, our Siamese fairness approach can significantly improve individual fairness without trading accuracy.

We envisage that the state-of-the-art bias mitigation techniques can be further refined from the perspective of accurate fairness. Studies on individual fairness usually rely on pre-specified sensitive attributes and disadvantaged groups. As part of future work, the fairness confusion matrix can be adapted to analyze which sensitive attributes pose more impacts on prediction outcomes. The accurate fairness criterion can be further utilized to help diagnose which groups under these attributes are treated unfavorably.

\bibliography{aaai23}
\clearpage

\section{Supplementary Material: Technical Appendix}
\subsection{Discussion on Theorem 3}
\if 0
We split the \textbf{Theorem 3} in the main body into two theorems to present the detail proofs and discus the accurate fairness connections with group fairness from the viewpoint of group treatment.

Let $\mathbf{X}$, $\mathbf{A}$, $\mathbf{Y}$, and $\mathbf{\hat{Y}}$ denote the random variables representing the non-protected attributes, the protected attributes, the ground truths, and the prediction results. Each input $(x,a)\in V$ is associated with a ground truth label $y\in Y$. Let $I(x,a)=\{(x,a')~|~a'\in A\}$ be the similar sub-population of $(x,a)$, a set of the individuals sharing the same non-sensitive attributes values with $(x,a)$.
As the group fairness criteria are concerned, assume each individual $(x, a') \in I(x,a)$ is associated with the same ground truth as $(x, a) \in V$ is. The union of the similar sub-populations is $I(W)={\cup}_{(x,a)\in W} I(x,a)$ for certain $W \subseteq V$. Let $I_{a}(W)$ be the set of individuals in $I(W)$ with the same sensitive attributes values $a\in A$, i.e., $I_{a}(W)=\{(x,a)|(x,a) \in I(W))\}$. 

\begin{theorem}
\label{loss-theorem}
If a classifier $f:X\times A\rightarrow\hat{Y}$ is accurately fair to each input in $ W \subseteq V$, for any $(x,a^*)\in W $, $a\in A$ and $a\neq a^*$, the treatment distance expectation $\mathbf{E}[D(\mathbf{Y},f(\mathbf{X},a))]$ over $I_{a}(W)$, is at most $K$ times of the individual distance expectation $\mathbf{E}[d((\mathbf{X},a^*),(\mathbf{X},a))]$.
\end{theorem}
\begin{proof}
Recall that the treatment distance expectation $\mathbf{E}[D(\mathbf{Y},f(\mathbf{X},a))]$ 
over $I_{a}(W)$, is defined as:
\begin{equation*}
 \mathbf{E}[D(\mathbf{Y},f(\mathbf{X},a))] = \sum_{(x,a) \in I_{a}(W)}\mathbf{P}(y,x,a)D(y,f(x,a))
\end{equation*}
where $y$ is the ground truth of $(x,a^*)$ and $\mathbf{P}(y,x,a)$ is the joint probability of $\mathbf{Y}=y$ and $\mathbf{(X,A)}=(x,a)$, and the individual distance expectation $\mathbf{E}[d((\mathbf{X},a^*),(\mathbf{X},a))]$ over $I_{a}(W)$ is defined as:
\begin{equation*}
\mathbf{E}[d((\mathbf{X},a^*),(\mathbf{X},a))] = \sum_{(x,a) \in I_{a}(W)}\mathbf{P}(x,a)d((x,a^*),(x,a))
\end{equation*}
where $\mathbf{P}(x,a)$ is the probability of $\mathbf{(X,A)}=(x,a)$.

Then, since $f$ is accurately fair to each input in $W$, the proof is concluded by Accurate Fairness Definition.
\begin{equation*}
\mathbf{E}[D(\mathbf{Y},f(\mathbf{X},a))] \leq K\mathbf{E}[d((\mathbf{X},a^*),(\mathbf{X},a))]
\end{equation*}
over $I_{a}(W)$.
\end{proof}

\begin{theorem}
\label{treatment-theorem}
If a classifier $f:X\times A\rightarrow\hat{Y}$ is accurately fair to each input in $W \subseteq V$, for any $(x,a^*)\in W $, $a\in A$ and $a\neq a^*$, the treatment difference expectation   $\mathbf{E}[D(f(\mathbf{X},a^*),f(\mathbf{X},a))]$ 
is at most $K$ times of the individual distance expectation $\mathbf{E}[d((\mathbf{X},a^*),(\mathbf{X},a))]$ over $I(W)$.
\end{theorem}
\begin{proof}
Recall that the treatment difference expectation $\mathbf{E}[D(f(\mathbf{X},a^*),f(\mathbf{X},a))]$ over $I(W)$ is defined as:
{\small\begin{equation*}
 \mathbf{E}[D(f(\mathbf{X},a^*),f(\mathbf{X},a))] = \sum_{\substack{(x,a) \in I_a(W) \\ (x,a^*) \in I_{ a^*}(W)}}\mathbf{P}(x,a)D(f(x,a^*),f(x,a))
\end{equation*}}

Since $f$ is accurately fair to each input in $W $, by Accurate fairness Definition and Accurate Fairness Theorems in the main body, 
 \begin{equation*}
\mathbf{E}[D(f(\mathbf{X},a^*),f(\mathbf{X},a))] \leq K\mathbf{E}[d((\mathbf{X},a^*),(\mathbf{X},a))]
\end{equation*}
over $I_{a}(W)$.
\end{proof}
\fi
Group fairness concerns whether the groups with the same sensitive attributes values receive equal treatments. However, group fairness could be satisfied in a way such that individual fairness is unnecessarily neglected, e.g., by carefully selecting individuals who are unfavorably discriminated against in contrast to their similar counterparts.

By Theorem 3, if classifiers $f$ is accurately fair to each input in $W\subseteq V$, then 
\begin{eqnarray}
\mathbf{E}[D(f(\mathbf{X},a^*),f(\mathbf{X},a))] \leq  K\mathbf{E}[d((\mathbf{X},a^*),(\mathbf{X},a))] \label{A}\\
\mathbf{E}[D(\mathbf{Y},f(\mathbf{X},a))] \leq  K\mathbf{E}[d((\mathbf{X},a^*),(\mathbf{X},a))] \label{B}
\end{eqnarray}
over $I_a(W)$ with $(x,a^*)in W, a\in A$ and $a\neq a^*$. 

Inequality \eqref{A} suggests that under the accurate fairness criterion, the treatment differences between groups $I_{a^*}(W)$ and $I_{a}(W)$ are bounded by the distances between the individuals of the groups. This implies that accurate fairness can induce bounded disparity between the groups with different sensitive attributes values. 
Moreover, inequality \eqref{B} further concerns the ground-truths and suggests that under the accurate fairness criterion, 
each group enjoys a similar treatment close to the ground-truths, also bounded by the distances between their individuals. Thus, as an individual level fairness criterion, accurate fairness can help reach a synergy between individual fairness and group fairness.

\subsection{Performance Evaluation}
Herein, we present and analyze the performance evaluation results of the Logistics Regression (LR) and Support Vector Machine (SVM) classifiers on the \emph{Adult (Census Income)}, \emph{German Credit}, and \emph{ProPublica Recidivism (COMPAS)} datasets. 

\subsubsection{Logistics Regression}
We use the Mean Squared Error loss function and a gradient descent optimizer for training the LR classifiers, and set the Mean Absolute Error as the distance metrics in the accurate fairness constraints. 

\tablename~\ref{tab:statisticas_of_LR} reports the average statistics of ten runs for each bias mitigation approach compared over the three fairness datasets. Columns iFair, LFR, SF and SF\_3 show the performances of the LR classifiers resulted by applying iFair, LFR, and our Siamese fairness approach on the baseline models, respectively. \figurename~\ref{fig:statistics_of_LR}  and \figurename~\ref{fig:fairea_of_LR} further demonstrates the fairness-accuracy trade-offs in terms of fairness confusion matrix performances and the Fairea evaluation, respectively.

\begin{table*}[ht]
 \centering
 \setlength{\tabcolsep}{4mm}{
\begin{tabular}{c|c c c c c}
\hline
Metrics & BL & iFAIR & LFR & SF & SF\_3 \\ \hline
ACC & 0.846±0.005 & 0.765±0.052 & 0.819±0.010 & \textbf{0.863±0.012} & \textbf{0.855±0.010} \\ 
SPD & 0.090±0.007 & 0.106±0.114 & 0.045±0.034 & 0.081±0.021 &  \\ 
EOD & 0.044±0.021 & 0.115±0.118 & 0.019±0.040 & 0.031±0.019 &  \\ 
AVOD & 0.043±0.026 & 0.118±0.135 & 0.004±0.008 & 0.022±0.027 &  \\ 
SPD$_{I(V)}$ & 0.000±0.000 & 0.000±0.000 & 0.020±0.046 & \textbf{0.000±0.000} &  \\ 
EOD$_{I(V)}$ & 0.000±0.000 & 0.000±0.000 & 0.018±0.041 & \textbf{0.000±0.000} &  \\ 
AVOD$_{I(V)}$ & 0.000±0.000 & 0.000±0.000 & 0.004±0.007 & \textbf{0.000±0.000} & \multirow{-6}{*}{——} \\ 
CON & 0.991±0.004 & 0.967±0.030 & 0.998±0.002 & 0.959±0.021 & 0.972±0.011 \\ 
FTA & 1.000±0.000 & 1.000±0.000 & 0.980±0.046 & 1.000±0.000 & 1.000±0.000 \\ 
TFR & 0.846±0.005 & 0.765±0.052 & 0.804±0.046 & \textbf{0.863±0.012} & \textbf{0.855±0.010} \\ 
TBR & 0.000±0.000 & 0.000±0.000 & 0.015±0.037 & 0.000±0.000 & 0.000±0.000 \\ 
FFR & 0.154±0.005 & 0.235±0.052 & 0.176±0.003 & 0.137±0.012 & 0.145±0.010 \\ 
FBR & 0.000±0.000 & 0.000±0.000 & 0.005±0.010 & 0.000±0.000 & 0.000±0.000 \\ 
F-R & 0.846±0.005 & 0.765±0.052 & 0.823±0.004 & 0.863±0.012 & 0.855±0.010 \\ 
F-P & 1.000±0.000 & 1.000±0.000 & 0.983±0.042 & 1.000±0.000 & 1.000±0.000 \\ 
F-F1 & 0.914±0.003 & 0.864±0.034 & 0.889±0.026 & \textbf{0.924±0.007} & \textbf{0.919±0.006} \\ \hline
\end{tabular}
}
\caption{Statistics of LR classifiers on the three fairness datasets}
\label{tab:statisticas_of_LR}
\end{table*}
\begin{figure}[!htb]
\centering
\centerline{\includegraphics[scale=0.60, trim=0 10 0 10]{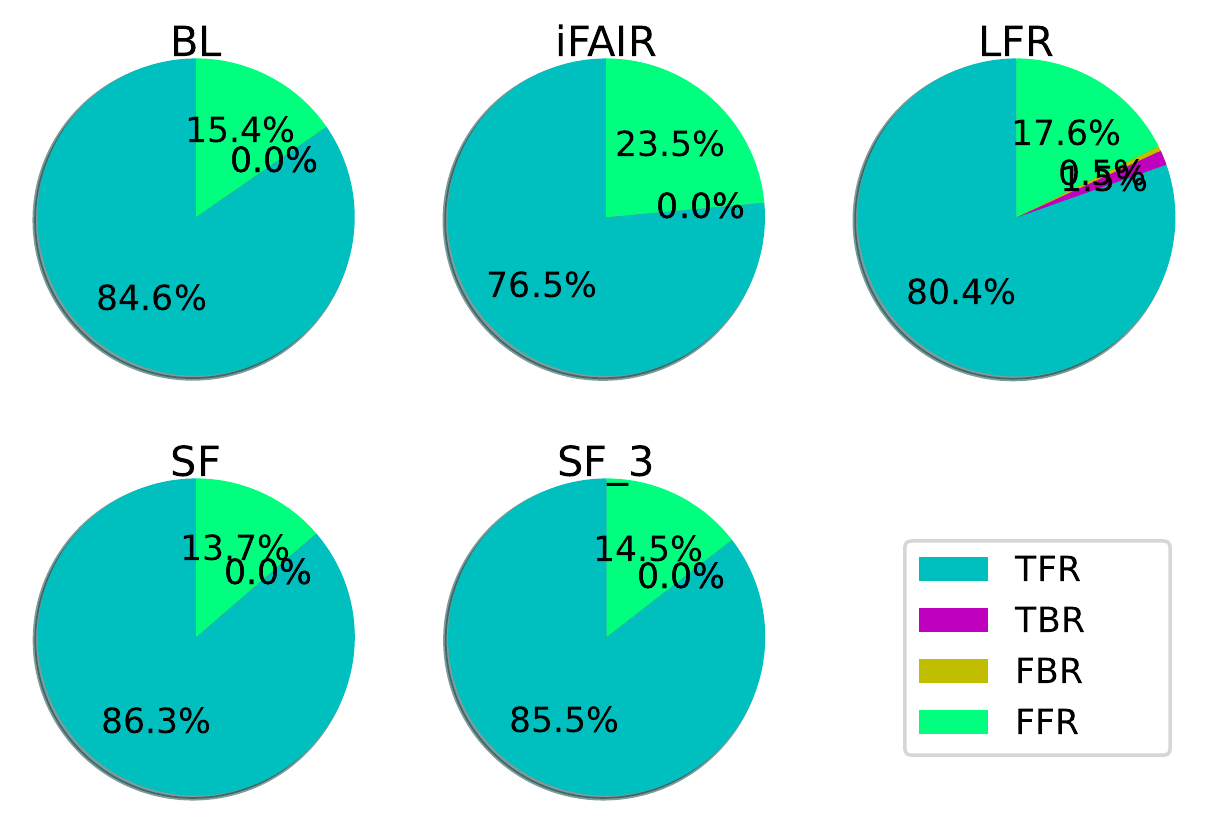}}
\caption{Fairness Confusion Matrix performances (LRs)}
\label{fig:statistics_of_LR}
\end{figure}
\begin{figure}[!htb]
 \centering
\centerline{\includegraphics[scale=0.60, trim=0 10 0 10]{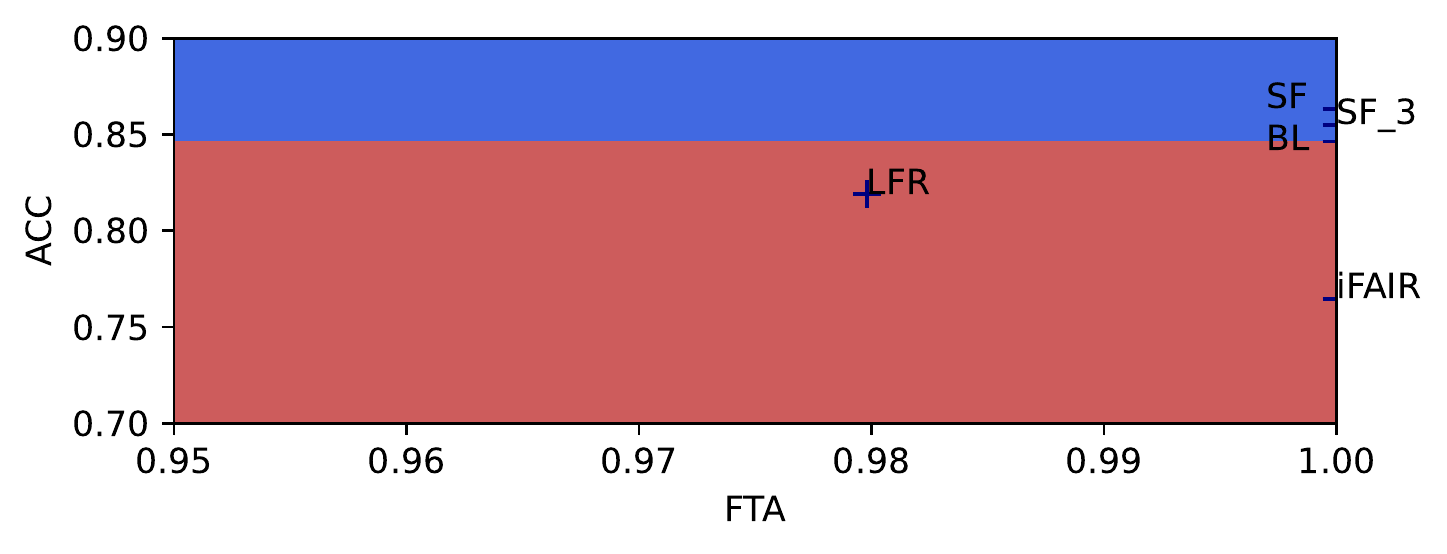}}
 \caption{Fairea evaluation (LRs)}
 \label{fig:fairea_of_LR}
\end{figure}

It can be seen in \tablename~\ref{tab:statisticas_of_LR} and \figurename~\ref{fig:statistics_of_LR} that our Siamese fairness approach achieves the highest TFR and fair-F1 score performances on the LR classifiers with both accuracy (ACC) and individual fairness (FTA) well improved. Compared with other individual bias mitigation techniques, our Siamese fairness approach on average promotes 0.072 ACC (9.03\%  higher ACC) and 0.01 FTA (1.02\% higher FTA), which is a direct consequence of the observation that our Siamese fairness approach promotes on average 0.079 TFR (10.09\% higher TFR), reducing 0.008 TBR (100\% lower TBR), 0.003 FBR (100\% lower FBR), 0.068 FFR (33.33\% lower FFR), and promoting 0.048 F-F1 score (5.43\% higher F-F1 score).

In \figurename~\ref{fig:fairea_of_LR}, the Fairea evaluation approach locates the SF and SF\_3 models on the right borderline of the better ACC (blue) region, which is exactly the win-win trade-off region with FTA=1.00. This shows that only our Siamese fairness approach achieves higher ACC without sacrificing FTA. Moreover, our Siamese fairness approach also achieves the highest full group fairness over the union of the similar sub-populations, indicating that the improvement of accurate fairness leads not only to the improvement of individual fairness, but also to the improvement of group fairness.

\subsubsection{Support Vector Machines}
For the SVM classifiers, we use the Hinge loss function and an Adam optimizer, and similarly, set the Mean Absolute Error as the distance metrics in the accurate fairness constraints. 

The statistical evaluation results, fairness confusion matrix performances and Fairea evaluation are demonstrated in \tablename~\ref{tab:statisticas_of_SVM}, \figurename~\ref{fig:statistics_of_SVM} and \figurename~\ref{fig:fairea_of_SVM}, respectively.
\begin{table*}[!htb]
 \centering
 \setlength{\tabcolsep}{4mm}{
\begin{tabular}{c|c c c c c}
\hline
Metric & BL & iFAIR & LFR & SF & SF\_3 \\ \hline
ACC & \textbf{0.827±0.014} & 0.722±0.076 & 0.792±0.026 & 0.821±0.015 & 0.817±0.013 \\ 
SPD & 0.054±0.031 & 0.091±0.077 & 0.040±0.038 & 0.045±0.031 &  \\ 
EOD & 0.025±0.032 & 0.094±0.073 & 0.037±0.039 & 0.015±0.029 &  \\ 
AVOD & 0.035±0.051 & 0.124±0.089 & 0.028±0.031 & 0.018±0.044 &  \\ 
SPD$_{I(V)}$ & 0.016±0.027 & 0.084±0.077 & 0.044±0.044 & \textbf{0.006±0.021} &  \\ 
EOD$_{I(V)}$ & 0.021±0.034 & 0.086±0.073 & 0.044±0.042 & \textbf{0.008±0.024} &  \\ 
AVOD$_{I(V)}$ & 0.035±0.055 & 0.115±0.092 & 0.027±0.030 & \textbf{0.013±0.038} & \multirow{-6}{*}{——} \\ 
CON & 0.992±0.011 & 0.982±0.018 & 0.989±0.007 & 0.996±0.006 & 0.996±0.005 \\ 
FTA & 0.939±0.108 & 0.914±0.076 & 0.954±0.044 & 0.979±0.036 & 0.983±0.017 \\ 
TFR & 0.779±0.085 & 0.670±0.081 & 0.763±0.049 & \textbf{0.807±0.034} & \textbf{0.806±0.023} \\ 
TBR & 0.048±0.087 & 0.052±0.048 & 0.030±0.032 & 0.014±0.027 & 0.012±0.014 \\ 
FFR & 0.160±0.028 & 0.244±0.077 & 0.192±0.020 & 0.172±0.012 & 0.177±0.009 \\ 
FBR & 0.013±0.024 & 0.034±0.034 & 0.016±0.014 & 0.007±0.012 & 0.005±0.006 \\ 
F-R & 0.836±0.024 & 0.737±0.089 & 0.803±0.024 & 0.827±0.011 & 0.822±0.010 \\ 
F-P & 0.946±0.102 & 0.926±0.076 & 0.965±0.039 & 0.985±0.030 & 0.987±0.015 \\ 
F-F1 & 0.874±0.068 & 0.812±0.062 & 0.871±0.027 & \textbf{0.894±0.019} & \textbf{0.893±0.012} \\ \hline
\end{tabular}
}
\caption{Statistics of SVM classifiers on the three fairness datasets}
\label{tab:statisticas_of_SVM}
\end{table*}

\begin{figure}[!htb]
 \centering
\centerline{\includegraphics[scale=0.60, trim=0 10 0 10]{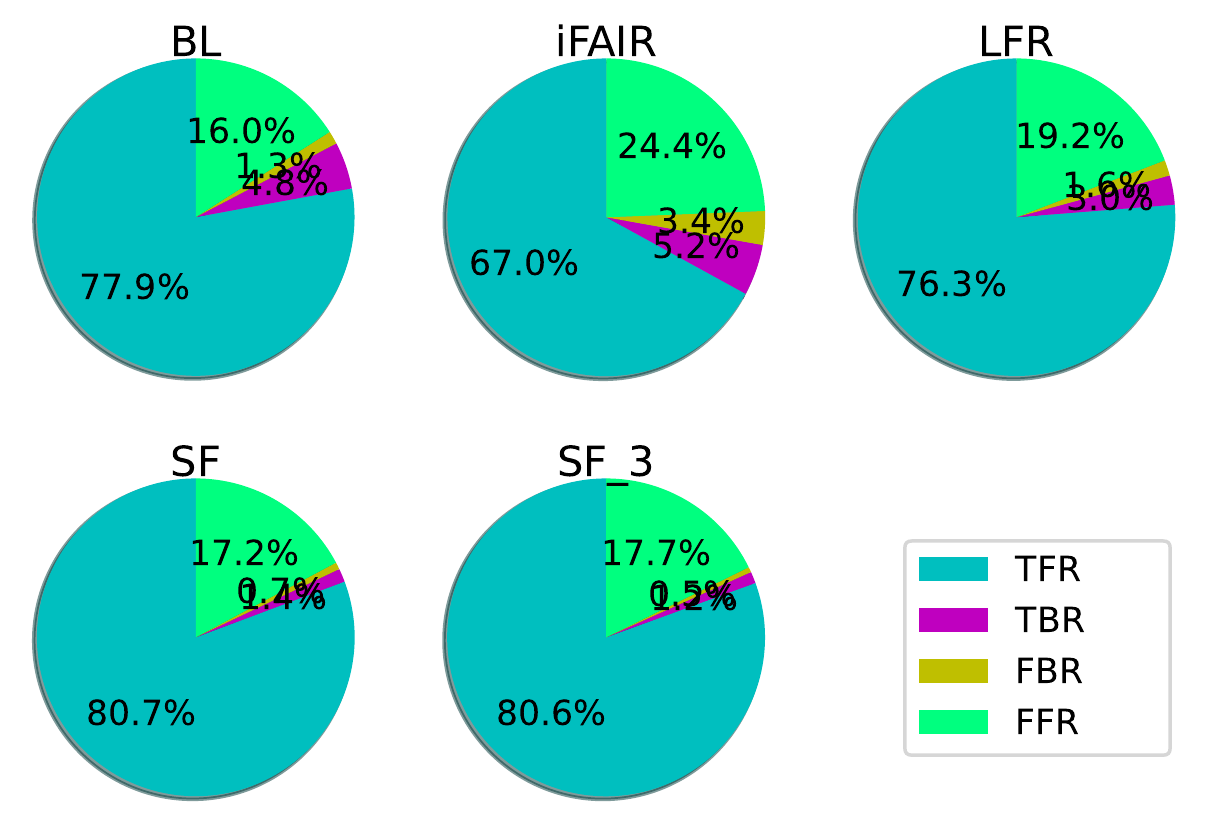}}
 \caption{Fairness Confusion Matrix performances (SVMs)}
 \label{fig:statistics_of_SVM}
\end{figure}
\begin{figure}[!htb]
 \centering
\centerline{\includegraphics[scale=0.60, trim=0 10 0 10]{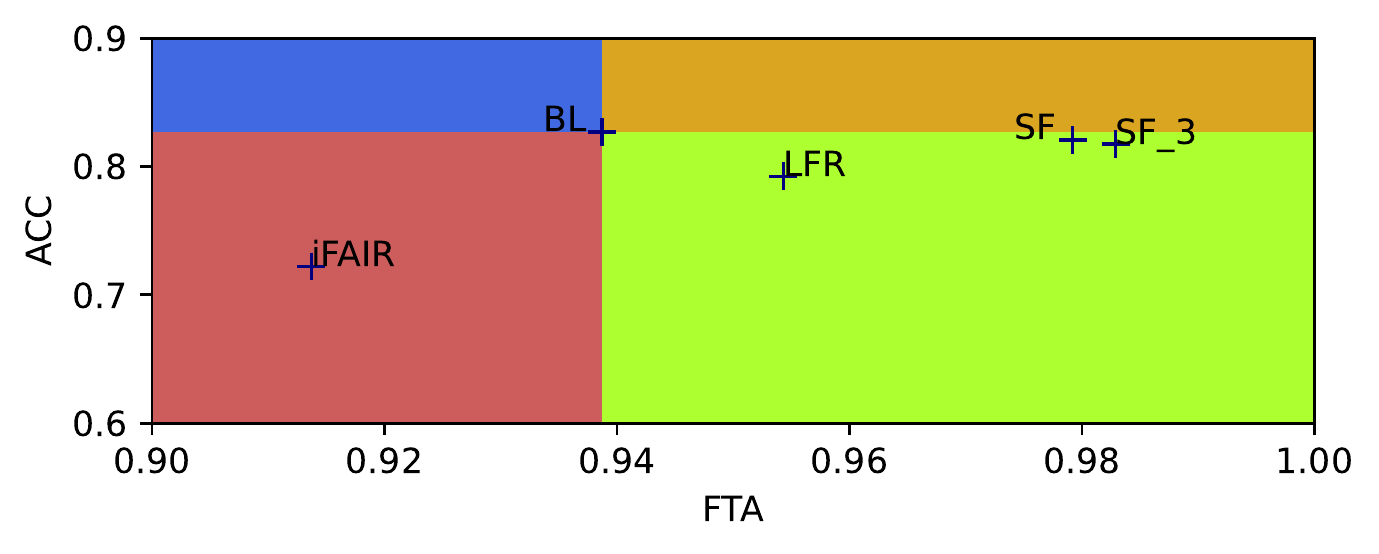}}
 \caption{Fairea evaluation (SVMs)}
 \label{fig:fairea_of_SVM}
\end{figure}
Our Siamese fairness approach also achieves the highest TFR and fair-F1 score performances on the SVM classifiers, with individual fairness (FTA) improved and accuracy (ACC) maintained, as shown in  \tablename~\ref{tab:statisticas_of_SVM} and \figurename~\ref{fig:statistics_of_SVM}. Compared with the state-of-the-art individual bias mitigation techniques, our Siamese fairness approach on average promotes 0.063 ACC (8.38\% higher ACC), 0.045 FTA (4.84\% higher FTA) and 0.052 F-F1 score (6.22\% higher F-F1 score), as a result of promoting 0.091 TFR (12.67\% higher TFR), reducing 0.027 TBR (66.75\% lower TBR), 0.018 FBR (71.31\% lower FBR), and 0.046 FFR (21.10\% lower FFR). 

\figurename~\ref{fig:fairea_of_SVM} shows that the Fairea evaluation approach locates the SF and SF\_3 models in the better FTA (green) region near the border-line of the win-win (yellow) region, as SF and SF\_3 achieve better individual fairness (FTA) with a slight accuracy loss on the SVM classifiers. Similarly, our Siamese fairness approach also achieves the highest group fairness over the union of the similar sub-populations by improving individual fairness.

\end{document}